  \providecommand\BibTeX{{%
    \normalfont B\kern-0.5em{\scshape i\kern-0.25em b}\kern-0.8em\TeX}}}
\newtheorem{prop}{Proposition}
\newtheorem{lemma}{Lemma}
\newtheorem{definition}{Definition}
\newcommand{\vpara}[1]{\vspace{0.05in}\noindent\textbf{#1 }}
\newcommand*\phantomrel[1]{\mathrel{\phantom{#1}}}
\newcommand{\N}{\lfloor N \rfloor}
\newcommand{\mG}{\mathcal{G}}
\newcommand{\ubar}[1]{\underaccent{\bar}{#1}}
\begin{document}
\fancyhead{}
%%
%% The "title" command has an optional parameter,
%% allowing the author to define a "short title" to be used in page headers.
\title{Fast Learning of MNL Model from General Partial Rankings with Application to Network Formation Modeling}

%%
%% The "author" command and its associated commands are used to define
%% the authors and their affiliations.
%% Of note is the shared affiliation of the first two authors, and the
%% "authornote" and "authornotemark" commands
%% used to denote shared contribution to the research.
\author{Jiaqi Ma}
\authornote{Both authors contributed equally to this research.}
\email{jiaqima@umich.edu}
\affiliation{%
  \institution{School of Information\\ University of Michigan}
  \city{Ann Arbor}
  \state{Michigan}
  \country{USA}
}

\author{Xingjian Zhang}
\authornotemark[1]
\email{jimmyzxj@umich.edu}
\affiliation{%
  \institution{Department of EECS\\ University of Michigan}
  \city{Ann Arbor}
  \state{Michigan}
  \country{USA}
}

\author{Qiaozhu Mei}
\email{qmei@umich.edu}
\affiliation{%
  \institution{School of Information and Department of EECS\\ University of Michigan}
  \city{Ann Arbor}
  \state{Michigan}
  \country{USA}
}

%%
%% By default, the full list of authors will be used in the page
%% headers. Often, this list is too long, and will overlap
%% other information printed in the page headers. This command allows
%% the author to define a more concise list
%% of authors' names for this purpose.
% \renewcommand{\shortauthors}{Ma, Zhang, and Mei.}

%%
%% The abstract is a short summary of the work to be presented in the
%% article.
\begin{abstract}
Multinomial Logit (MNL) is one of the most popular discrete choice models and has been widely used to model ranking data. However, there is a long-standing technical challenge of learning MNL from many real-world ranking data: exact calculation of the MNL likelihood of \emph{partial rankings} is generally intractable. In this work, we develop a scalable method for approximating the MNL likelihood of general partial rankings in polynomial time complexity. We also extend the proposed method to learn mixture of MNL. We demonstrate that the proposed methods are particularly helpful for applications to choice-based network formation modeling, where the formation of new edges in a network is viewed as individuals making choices of their friends over a candidate set. The problem of learning mixture of MNL models from partial rankings naturally arises in such applications. And the proposed methods can be used to learn MNL models from network data without the strong assumption that temporal orders of all the edge formation are available. We conduct experiments on both synthetic and real-world network data to demonstrate that the proposed methods achieve more accurate parameter estimation and better fitness of data compared to conventional methods\footnote{Code available at \url{https://github.com/xingjian-zhang/Fast-Partial-Ranking-MNL}.}.
\end{abstract}

%%
%% The code below is generated by the tool at http://dl.acm.org/ccs.cfm.
%% Please copy and paste the code instead of the example below.
%%
\begin{CCSXML}
<ccs2012>
   <concept>
       <concept_id>10002951.10003317.10003338.10003343</concept_id>
       <concept_desc>Information systems~Learning to rank</concept_desc>
       <concept_significance>500</concept_significance>
       </concept>
   <concept>
       <concept_id>10002951.10003317.10003338.10003340</concept_id>
       <concept_desc>Information systems~Probabilistic retrieval models</concept_desc>
       <concept_significance>500</concept_significance>
       </concept>
   <concept>
       <concept_id>10002951.10003260.10003261</concept_id>
       <concept_desc>Information systems~Web searching and information discovery</concept_desc>
       <concept_significance>300</concept_significance>
       </concept>
 </ccs2012>
\end{CCSXML}

\ccsdesc[500]{Information systems~Learning to rank}
\ccsdesc[500]{Information systems~Probabilistic retrieval models}
\ccsdesc[300]{Information systems~Web searching and information discovery}

%%
%% Keywords. The author(s) should pick words that accurately describe
%% the work being presented. Separate the keywords with commas.
\keywords{Learning to rank, multinomial logit model, Plackett-Luce model, partial ranking, network formation modeling}

%%
%% This command processes the author and affiliation and title
%% information and builds the first part of the formatted document.
\maketitle

\section{Introduction}

Discrete choice models~\citep{train2009discrete} concern how individuals make choices from a candidate set of alternatives, and have wide applications in many areas, such as recommendation~\citep{mottini2018understanding}, information retrieval~\citep{liu2011learning}, economics~\citep{mcfadden1977modelling}, etc. With the increasing availability of abundant real-world data, more and more often we not only have the record of people's single choice over the candidate set, but also relative rankings of multiple items. Learning discrete choice models from ranking data has thus attracted much research attention~\citep{liu2011learning,hunter2004tutorial,Liu_Zhao_Liao_Lu_Xia_2019}. In this paper, we focus on the problem of learning Multinomial logit (MNL) model, which is one of the most popular discrete choice models and also known as Plackett-Luce model~\citep{luce1959individual,plackett1975analysis}. 

One long-standing technical challenge for learning MNL model from real-world ranking data is that the exact calculation of the MNL likelihood of \emph{partial rankings} is generally intractable when the number of candidate choices is large. This computational challenge has limited the application of MNL model to some special types of ranking data, such as \emph{Top-$K$ ranking}, where the exact order of the top $K$ candidates is required to be known. There are two lines of research that aim to approximate the MNL likelihood of general partial rankings. The first direction is to use \emph{rank breaking}~\citep{soufiani2014computing,khetan2016data,khetan2018generalized}, which usually extracts pairwise comparisons from the general partial rankings and treats the pairs as independent observations. To mitigate the information loss in the extraction of pairwise comparisons, \citet{khetan2018generalized} further proposed \emph{generalized rank breaking}, which extracts maximal ordered partitions from the general partial rankings to preserve more order information. The second direction is to approximate the likelihood by Markov Chain Monte Carlo (MCMC) sampling~\citep{Liu_Zhao_Liao_Lu_Xia_2019}. However, both generalized rank breaking and sampling-based methods suffer from an exponential time complexity in terms of the number of candidate choices, which are not suitable for large-scale social network modeling. Recently, \citet{ma2021learningtorank} demonstrated that the likelihood of \emph{Partitioned-Preference rankings}, which is a special type of partial rankings but is more general than Top-$K$ rankings, can be efficiently approximated through a numerical integral approach. 

In this paper, our first contribution is the development of a scalable method for learning MNL from general partial rankings, by combining the idea of generalized rank breaking~\citep{khetan2018generalized} and the numerial integral approach~\citep{ma2021learningtorank}. We also extend the method to learn the mixture of MNL models by proposing an Expectation-Maximization (EM) algorithm with a novel initialization method to facilitate better convergence. Through simulation study, we demonstrate that the proposed methods are able to achieve significantly lower computational cost while having similar estimation accuracy compared to state-of-the-art sampling-based baseline methods.

We further demonstrate that the proposed methods are particularly useful on applications to choice-based network formation modeling, which is a new and exciting application domain of the MNL model recently introduced by \citet{overgoor2020choosing}. Concretely, we can view the social network formation process as people making choices of their friends. The formation of each new edge in the network can be viewed as a node choosing to connect another node, over all the candidate nodes available. Most prior choice-based network modeling methods assume the temporal orders of all the edge formation are available and interpret the sequentially established edges as top-$K$ rankings~\citep{overgoor2020choosing,overgoor2020scaling}. However, such assumptions may be unrealistic in practice. On one hand, there are a lot of network data where the temporal formation information is partially missing or even totally unavailable. On the other hand, the temporal information does not necessarily imply the level of preference in the friend choices. Our proposed methods, instead, do not rely on such strong assumptions when being used to learn the MNL models. 

We apply the proposed methods to large-scale network formation modeling on both synthetic and real-world networks. We generate synthetic network through mixture of preferential attachment and uniform attachment models. Our empirical results demonstrate that the proposed methods are able to faithfully recover the ground truth parameters of the generative models, while prior choice-based network formation modeling methods fail when the temporal information is missing. Experiments on two real-world networks show that the proposed methods are able to achieve better link predication accuracy than prior baselines, suggesting that the proposed methods are able to achieve a better fitness of the data.

\section{Related Work}

\subsection{Learning MNL from Partial Rankings}
Many algorithms have been proposed to learn MNL model from ranking data~\citep{hunter2004tutorial, czumaj2007faster, azari2013generalized, maystre2015fast,negahban2017rank, khetan2018generalized, ma2021learningtorank} and its mixture~\citep{gormley2008mixture, oh2014learning, zhao2016learning, mollica2017bayesian,Liu_Zhao_Liao_Lu_Xia_2019}. However, few of them is able to learn mixture of MNL models from general partial rankings in a tractable time. Among the existing methods to learn MNL models, some are designed for full rankings~\citep{azari2013generalized, zhao2016learning}; some are designed for special cases of partial rankings~\citep{hunter2004tutorial, maystre2015fast,  mollica2017bayesian, negahban2017rank, khetan2018generalized, ma2021learningtorank}, e.g., Top-$K$ rankings or Partitioned-Preference rankings. But only a few algorithms are proposed to learn MNL models from the most general partial rankings. Moreover, existing methods that can process general partial rankings all suffer from the intractable time complexity. \citet{Liu_Zhao_Liao_Lu_Xia_2019} proposed an algorithm that samples full rankings from partial rankings to learn the MNL model, but this method is not scalable for large dataset due to its exponential time complexity with respect to the number of items. The algorithm by \citet{khetan2018generalized} breaks the general partial rankings into maximal-ordered partitions but has to discard a part of hard-to-evaluate data to confine the time complexity. Though the method proposed by \citet{ma2021learningtorank} is able to learn MNL models in polynomial time, it can handle partitioned preferences at best, which is still not the most general case. Our proposed method combines the algorithms of \citet{khetan2018generalized} and \citet{ma2021learningtorank} to tackle this problem using an efficient numerical integral estimation that can be evaluated in polynomial time.

\subsection{Network Formation Modeling with Discrete Choice Models}

Modeling the formation and growth of network is essential to explore the structure of networks~\citep{holme2019rare, feinberg2020choices, iniguez2019special}. \citet{overgoor2020choosing} recently proposed a framework to model the growth of network as discrete choice of incoming and existing nodes. This framework is general enough to include many existing growth patterns, e.g. preferential attachment. Inspired by this framework, a vast number of researches utilize the temporal information of social networks to better understand the dynamics of the networks~\citep{young2019phase, lee2021dynamics, tomlinson2020learning, reeves2020network, uvzupyte2020test}. Many researchers apply this framework in various domains related to networks such as sociology, physics, medical science and etc~\citep{rosenfeld2020predicting, kawakatsu2021emergence, fu2020modelling, kanakia2020mitigating, overgoor2020structure, lee2021dynamics, holme2021networks, ubaldi2021emergence, mattsson2020financial}. Moreover, many extensions have been made from this framework~\citep{battiston2020networks, gupta2020mixed, tomlinson2020learning, overgoor2020scaling}.

In complement of the existing studies, this work applies our proposed MNL learning methods to handle the situations where temporal information of the edge formation is not fully available or where the temporal information does not directly translate to the preference of nodes. In such situations, learning from partial rankings naturally arises.

\section{Fast Learning of MNL From General Partial Rankings}

In this section, we investigate how to efficiently learn an MNL model from general partial rankings. We develop a scalable method by combining the ideas from two recent studies~\citep{ma2021learningtorank,khetan2018generalized}. We also extend the proposed method to efficiently learn a mixture of MNL model with an EM algorithm.

\subsection{Problem Formulation and Notations}
\label{sec:problem}

We start by introducing the formal definition of the MNL model, and the intractability problem of estimating the MNL likelihood of general partial rankings. 

\vpara{The MNL model.} An MNL model is a popular discrete choice model that characterizes how one makes choices from a group of items. In particular, we assume each individual has an underlying utility score for each of the item, and the observed rankings of the items given by the individuals are noisy version of the utility score order. Now suppose there are $N$ different items and denote the set of items, $\{1,\dots,N\}$, by $\N$. Also denote the set of all possible permutations of $\N$ as $2^{\N}$. The MNL model is defined below.

\begin{definition} [Multinomial logit (MNL) model]
For an individual with $\boldsymbol{w}=[w_1,\dots,w_N]^T$ as the underlying utility scores of the $N$ items, under an \emph{MNL model}, the probability of observing a certain ranking of these items, $(i_1, i_2, \ldots, i_N) \in 2^{\N}$, is defined as
\begin{equation}
p\left(\left(i_1, i_2, \ldots, i_N\right) ; \boldsymbol{w}\right)=\prod_{j=1}^{N} \frac{\exp \left(w_{i_{j}}\right)}{\sum_{l=j}^{N} \exp \left(w_{i_{l}}\right)}.
\label{eq: mnl}
\end{equation}
\end{definition}

\vpara{Partial rankings.} In practice, we often do not observe clear full rankings of the $N$ items. For example, we may know someone's favorite top 5 movies or friends, but rarely their full preferences over all the movies or all the friends. Mathematically, we can represent such partially observed rankings as partially ordered sets (posets). For example, a poset, $\{(3 \succ 2), (3 \succ 5), (4 \succ 1) \}$, which is defined on the set of items $\{1,2,3,4,5\}$ and indicates that the item $3$ is preferred over the item 2 and the item 5; the item 4 is ranked higher than the item 1; but the relative ranking between, e.g., the item 3 and the item 1 is unknown.

\begin{figure}
    \centering
    \includegraphics[width=0.45\textwidth]{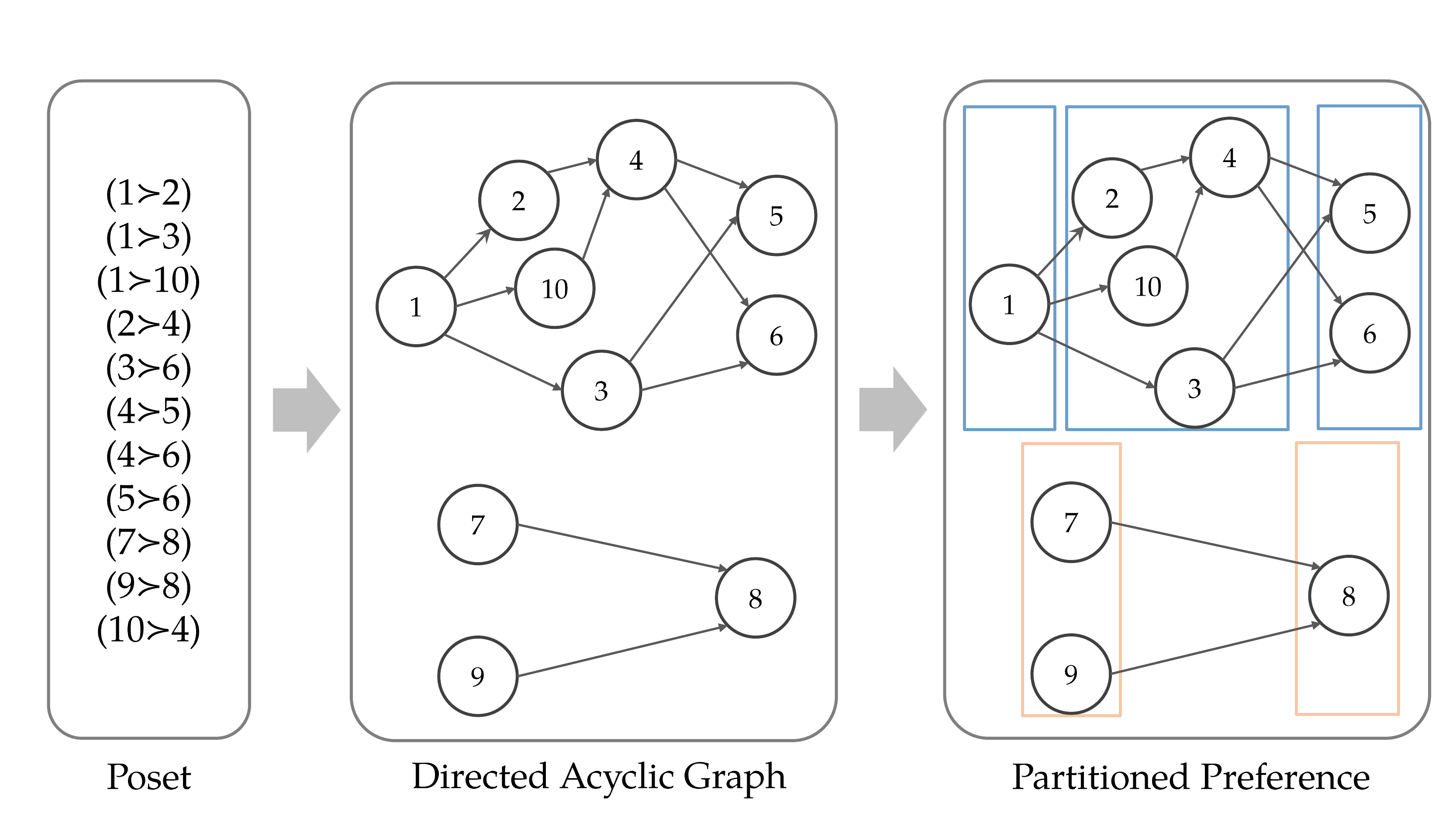}
    \caption{An illustrative example of the conversion from a poset to a DAG, and the extraction of the Partitioned-Preference rankings from a DAG.}
    \label{fig:flowchart}
    \vskip -8pt
\end{figure}

There are two well-known properties of posets, which we will utilize in this paper. First, assuming that there is no tied ranking, each poset corresponds to a \emph{directed acyclic graph} (DAG). See Figure~\ref{fig:flowchart} as an example. Second, we can define \emph{extensions} and \emph{linear extensions} of a poset, where the latter can be used to characterize the full rankings that are consistent with a partial ranking. 

\begin{definition} [Extension of poset]
A poset $\mG_2$ is an \emph{extension} of another poset $\mG_1$ if both posets are defined on the same set of items $X$, and for all elements $x, y \in X$, if $(x \succ y) \in \mG_1$, then $(x \succ y) \in \mG_2$. Moreover, we call $\mG_2$ a \emph{linear extension} of $\mG_1$ if $\mG_2$ is additionally a full order of $X$. And we denote the set of all linear extensions of $\mG_1$ on $X$ as $\Omega(\mG_1; X)$. 
\end{definition}

\vpara{Intractability of the MNL likelihood on partial rankings.} Under an MNL model parameterized by utility scores $\boldsymbol{w}$, the probability of observing a partial ranking $\mG$ is the total probability of full rankings that are consistent with the observed $\mG$, which is given by
\begin{equation}
 P\left(\mG ; \boldsymbol{w}\right) = \sum_{\left(i_{1}, \cdots, i_{N}\right) \in \Omega(\mG;\N)} \prod_{l=1}^{N} \frac{\exp \left(w_{i_{l}}\right)}{\sum_{r=l}^{N} \exp \left(w_{i_{r}}\right)}.
\label{eq: prob}
\end{equation}

For general partial rankings, the number of linear extensions usually grows exponentially with the number of items $N$. Therefore, the summation over $\Omega(\mG;\N)$ in Eq.~(\ref{eq: prob}) makes the likelihood intractable to calculate. Furthermore, except for a few special types of partial rankings, there is no easy way to simplify Eq.~(\ref{eq: prob}) into a tractable closed-form formula. 

\vpara{Partitioned-Preference rankings.} In Section~\ref{sec:polynomial}, we propose a polynomial-time algorithm that is able to efficiently approximate the MNL likelihood of partial rankings for large $N$. The proposed method is built on top of \citet{ma2021learningtorank}, which proposed a numerical approach for fast calculation of the MNL likelihood of a special type of partial rankings, \emph{Partitioned-Preference rankings}~\citep{lebanon2008non,lu2014effective,ma2021learningtorank}, as formally defined below.

\begin{definition} [Partitioned-Preference rankings] 
\label{def:partition}
Given a subset $S\subset\N$, an ordered list of $M$ disjoint partitions of $S$, $(S_1, S_2, \ldots, S_M)$ is called a \emph{Partitioned-Preference ranking} on $S$ if (a) $\cup_{m=1}^M = S$ and $S_m \cap S_{m'} =\emptyset$ for any $m\neq m'$; (b) $S_1\succ S_2 \succ \cdots \succ S_M$, where $S_m\succ S_{m'}$ indicates that any item in the partition $S_m$ has a higher rank than items in the partition $S_{m'}$; (c) the relative ranking of items within the same partition is unknown.
\end{definition}

It is easy to verify that Partitioned-Preference rankings are a strict subset of partial rankings through a counter example below. 
\begin{lemma}
\label{lemma:counter-example}
The partial ranking $\{(3 \succ 2), (3 \succ 5), (4 \succ 1) \}$ is not a Partitioned-Preference ranking on $\{1, 2, 3, 4, 5\}$. 
\end{lemma}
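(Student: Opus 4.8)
The plan is to argue by contradiction using the rigid block structure of Definition~\ref{def:partition}. Suppose the poset $\mG=\{(3\succ 2),(3\succ 5),(4\succ 1)\}$ were a Partitioned-Preference ranking, realized by some ordered partition $(S_1,\dots,S_M)$ of $\{1,2,3,4,5\}$. The one structural fact I would extract is that such a partition induces a very special order: by condition (b), any two items lying in \emph{different} blocks are comparable (the item in the earlier block is strictly preferred), while by condition (c) any two items in the \emph{same} block are incomparable. Consequently, ``$x$ and $y$ are incomparable'' is equivalent to ``$x$ and $y$ belong to a common block $S_m$,'' so incomparability is forced to be an equivalence relation on the items, and in particular it must be transitive.

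I would then produce a triple of items that violates transitivity of incomparability. First note that $\mG$ is already transitively closed, since none of its three relations chain together, so the only comparable pairs are $(3,2)$, $(3,5)$, $(4,1)$ and all remaining pairs are incomparable. Now examine items $2$, $3$, $4$: the pairs $\{2,4\}$ and $\{3,4\}$ are both incomparable in $\mG$, so transitivity would force $\{2,3\}$ to be incomparable as well; but $(3\succ 2)\in\mG$ makes $2$ and $3$ comparable. Equivalently, in block terms, $3$ and $4$ would have to share a block and $2$ and $4$ would have to share a block, placing $2$ and $3$ in the same block, yet $3\succ 2$ forces them into different blocks --- a contradiction. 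Hence no such ordered partition exists, and $\mG$ is not a Partitioned-Preference ranking.

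The only nontrivial step, and therefore the main obstacle, is justifying the equivalence between incomparability and block-sharing; everything after that is a finite check. This hinges on reading condition (b) as a statement about \emph{every} cross-block pair, not merely the relations listed in $\mG$, and on condition (c) ruling out any within-block comparison. Once that correspondence is established, the witnessing triple $\{2,3,4\}$ is the natural choice, since item $3$ is simultaneously tied with $4$ yet strictly above $2$ while $4$ is tied with $2$, a configuration that no single chain of ordered blocks can accommodate.
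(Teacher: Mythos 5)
Your proof is correct and follows essentially the same route as the paper's: both arguments hinge on the observation that, in a Partitioned-Preference ranking, two items are incomparable precisely when they share a block, and then exhibit a triple violating the resulting transitivity of incomparability. The only difference is cosmetic --- you use the witnessing triple $\{2,3,4\}$ (forcing $2$ and $3$ into one block despite $3\succ 2$), whereas the paper uses $\{1,3,4\}$ (forcing $1$ and $4$ into one block despite $4\succ 1$).
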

\begin{proof}
    See Appendix~\ref{sec:appendix-counter-example}.
\end{proof}

\subsection{Preliminary: A Numerical Approach for Partitioned-Preference Rankings}
As a preliminary, we briefly review the numerical approach proposed by \citet{ma2021learningtorank}. At the core of this approach is an alternative formulation of the MNL likelihood of a Partitioned-Preference ranking. It is shown that when the partial ranking takes the form of $S_1\succ \cdots S_M$, the likelihood in Eq.~(\ref{eq: prob}) can be rewritten as
\begin{equation}
P\left(S_{1} \succ \cdots \succ S_{M} ; \boldsymbol{w}\right)
=\prod_{m=1}^{M-1} \int_{u=0}^{1} \prod_{i \in S_{m}}\left(1-u^{\exp \left(w_{i}-w_{R_{m+1}}\right)}\right) d u,
\label{eq: int}
\end{equation}
where $R_{m+1}=\cup_{l=m+1}^M S_l$ and $w_{R_{m+1}}=\log\sum_{j\in R_{m+1}}\exp(w_j)$. 

Compared to Eq.~(\ref{eq: prob}), Eq.~(\ref{eq: int}) does not involve the intractable summation over all the linear extensions. While the exact calculation of Eq.~(\ref{eq: int}) is still not tractable due to the integrals involved in the formula, these integrals are one-dimensional and can be efficiently approximated by numerical integration. Furthermore, the gradients of the likelihood with respect to the utility score $\boldsymbol{w}$ can also be approximated by numerical integration. It is also shown that, to achieve any given numerical precision $\varepsilon$ for the likelihood and the gradients, the time complexity of the numerical approach is at most $O(N + \frac{1}{\varepsilon}\left|\cup_{m=1}^{M-1} S_m\right|^3)$, which is much more efficient than the exponential complexity if using Eq.~(\ref{eq: prob}).

\subsection{A Polynomial-Time Algorithm for MNL Likelihood of General Partial Rankings}
\label{sec:polynomial}

As we have seen at the end of Section~\ref{sec:problem}, Partitioned-Preference rankings are a strict subset of partial rankings. Therefore, the numerical approach using Eq.~(\ref{eq: int}) cannot be directly applied to approximate the MNL likelihood of more general partial rankings. However, inspired by \citet{ma2021learningtorank}, we can utilize the idea of \emph{Generalized Rank Breaking} (GRB)~\citep{khetan2018generalized} to extend the numerical approach to derive a polynomial-time algorithm approximating the MNL likelihood of general partial rankings. 

Specifically, \citet{khetan2018generalized} demonstrate that, for each general partial ranking, extracting maximal-ordered partitions (which correspond to Partitioned-Pereference rankings) of the corresponding DAG and calculating likelihood of the extracted Partitioned-Preference rankings can be a good proxy of the original likelihood. Therefore, a natural idea extending the numerical approach~\citep{ma2021learningtorank} to general partial rankings is to first extract (multiple) Partitioned-Preference rankings from the corresponding DAG, and then use the numerical approach to estimate the likelihood of each Partitioned-Preference ranking.

However, a remaining question is how to calculate the joint likelihood of \emph{multiple} Partitioned-Preference rankings. Fortunately, we show in the following Proposition~\ref{prop: disjoint} that the joint likelihood of multiple Partitioned-Preference rankings can be simply decomposed as the product of individual Partitioned-Preference ranking's. 

\begin{prop}
\label{prop: disjoint}
For two sets of Partitioned-Preference rankings $S_1 \succ \cdots \succ S_M$ and $T_1 \succ \cdots \succ T_{M'}$, where $\cup_{m=1}^M S_m = S$ and $\cup_{m=1}^{M'} = T$, and $S, T\subseteq \N$. To ease the notation, we use $\mG_S$ to denote the partial ranking $S_1 \succ \cdots \succ S_M$ and similarly define $\mG_T$. If $S\cap T = \emptyset$, then
\begin{equation}
    \label{eq: disjoint}
    P(\mG_S, \mG_T; \boldsymbol{w}) = P(\mG_S; \boldsymbol{w}) P(\mG_T; \boldsymbol{w}). 
\end{equation}
\end{prop}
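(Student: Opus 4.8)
The plan is to prove the factorization by passing from the combinatorial definition in Eq.~(\ref{eq: prob}) to a latent-variable representation of the MNL law, under which the two partial-ranking constraints become events on disjoint, independent collections of random variables. Concretely, I would first recall the classical competing-exponentials representation of the Plackett--Luce/MNL model: draw independent variables $\tau_i \sim \mathrm{Exp}(\exp(w_i))$ for each $i \in \N$, and declare the induced full ranking to be the one that sorts the items by increasing $\tau_i$ (smallest first, i.e.\ highest rank). A short memorylessness computation shows that $\Pr[\tau_{i_1} < \cdots < \tau_{i_N}] = \prod_{j} \exp(w_{i_j})/\sum_{l\ge j}\exp(w_{i_l})$, which is exactly Eq.~(\ref{eq: mnl}). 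Summing this identity over all linear extensions (ties occur with probability zero) then recovers Eq.~(\ref{eq: prob}), so for any poset $\mG$ on $\N$ we obtain $P(\mG;\boldsymbol{w}) = \Pr[\{\tau_i\}_{i\in\N} \text{ is consistent with } \mG]$, where ``consistent'' means $\tau_x < \tau_y$ for every relation $(x\succ y)\in\mG$.

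The second step is to read off that the constraint imposed by $\mG_S$ involves only the variables $\{\tau_i : i \in S\}$, since every relation in $\mG_S$ is between two items of $S$; likewise $\mG_T$ constrains only $\{\tau_i : i \in T\}$. I would therefore define the events $E_S = \{\{\tau_i\}_{i\in S} \text{ consistent with } \mG_S\}$ and $E_T = \{\{\tau_i\}_{i\in T} \text{ consistent with } \mG_T\}$, so that $P(\mG_S;\boldsymbol{w}) = \Pr[E_S]$, $P(\mG_T;\boldsymbol{w}) = \Pr[E_T]$, and $P(\mG_S,\mG_T;\boldsymbol{w}) = \Pr[E_S \cap E_T]$. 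Because $S \cap T = \emptyset$ and the $\tau_i$ are mutually independent, the $\sigma$-algebras generated by $\{\tau_i : i\in S\}$ and $\{\tau_i : i\in T\}$ are independent; since $E_S$ and $E_T$ are measurable with respect to these respective $\sigma$-algebras, they are independent events. Hence $\Pr[E_S \cap E_T] = \Pr[E_S]\,\Pr[E_T]$, which is precisely Eq.~(\ref{eq: disjoint}). Note that this argument uses only that the relations of $\mG_S$ and $\mG_T$ live within $S$ and $T$ respectively, so it in fact holds for arbitrary posets with disjoint supports, not just Partitioned-Preference rankings.

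The main obstacle, and the step I would spell out most carefully, is the equivalence between the combinatorial marginal in Eq.~(\ref{eq: prob}) and the latent-variable probability $\Pr[E_S]$ --- in particular the verification that items outside $S$ may be ignored. This is where disjointness does the real work: a full ranking lies in $\Omega(\mG_S;\N)$ exactly when its restriction to $S$ respects $\mG_S$, with the positions of the $N-|S|$ unconstrained items (including all of those in $T$) left free, so marginalizing them out leaves a quantity depending only on the relative order of $\{\tau_i : i\in S\}$. I expect the bookkeeping of this marginalization, together with the memorylessness calculation validating the exponential representation, to be the only nontrivial content; once the representation is in hand, the factorization is immediate. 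As an alternative that stays entirely within the paper's notation, one could instead induct on $N$, peeling off the globally top-ranked item using the recursive choice structure of Eq.~(\ref{eq: mnl}); I would nonetheless keep the latent-variable argument as the primary route, since it exposes the required independence directly and avoids tracking how removing an item simultaneously restricts each of $\mG_S$ and $\mG_T$.
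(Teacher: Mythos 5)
Your proposal is correct and follows essentially the same route as the paper: the paper represents the MNL law via independent Gumbel variables $g_{w_i}\sim\mathrm{Gumbel}(w_i)$ (citing Yellott's classical result) and concludes by observing that the events encoding $\mG_S$ and $\mG_T$ depend on disjoint, hence independent, collections of latent variables. Your competing-exponentials parametrization is just the monotone reparametrization $\tau_i = e^{-g_{w_i}}$ of the same construction, and your extra care about marginalizing the unconstrained items is a welcome but not substantively different elaboration of the same independence argument.
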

\begin{proof}
See Appendix~\ref{sec:appendix-disjoint}.
\end{proof}

We summarize the proposed method for efficiently approximating the MNL likelihood of a general partial ranking $\mG$ in Algorithm~\ref{alg: grb}, which we call it \textbf{Numerical GRB} (NumGRB for short).

\begin{algorithm}
\SetKwInOut{Input}{input}
\SetKwInOut{Output}{Output}
\Input{A partial ranking $\mG$.}
\Output{Approximate log-likelihood $l(\mG)$ for $\mG$.}
\BlankLine
$l(\mG) \gets 0$\;
$\mathcal{C} \gets$ Strongly connected components of $\mG$\;
\For{$\mG' \in \mathcal{C}$}{
    $S\gets [\,]$\;
    \While{$|Vertices(\mG')|>0$}{
    $L\gets$ Lowest common ancestors of all sink nodes of $\mG'$\;
    $S \gets$ $[Vertices(\mG')\backslash L \,,] + S$\;
    $\mG' \gets $ subgraph of $ \mG'$ induced by $L$\;
    }
    $l(\mG) \gets l(\mG) + P(S;\boldsymbol{w})$ where $P(S;\boldsymbol{w})$ is computed by Eq.~(\ref{eq: int});
}

\Return $l(\mG)$
\caption{Numerical GRB (NumGRB)}
\label{alg: grb}
\end{algorithm}

\vpara{Model learning.} For any model parameterizing the utility scores $\boldsymbol{w}$, the model parameters can be learned by minimizing the negative log-likelihood through gradient descent methods. The gradients of the likelihood for each Partitioned-Preference ranking can also be efficiently calculated through numerical integral. In the rest of this paper, we also call this model learning method as NumGRB.

\vpara{Time complexity of Algorithm~\ref{alg: grb}.} The major sub-procedures of Algorithm~\ref{alg: grb} are (1) finding the strongly connected components (SCCs) of the DAG, (2) extracting the maximal-ordered partition for each SCC by recursively finding the common ancestor algorithm, and (3) calculating the log-likelihood (and its gradients) of each Partitioned-Preference ranking through the numerical approach. Specifically, finding the SCCs have time complexity $O(N + E)$, where $E$ is the number of edges of the DAG. Extracting the Partitioned-Preference rankings overall takes a time complexity of $O(N^{3.6})$~\citep{khetan2018generalized} (as common ancestors of all the sink nodes of a DAG can be found in time complexity $O(N^{2.6})$~\citep{czumaj2007faster}). Finally, calculating the log-likelihood (and the gradients) using the numerical approach for each SCC has time complexity at most $O(N^3)$. Therefore, the overall time complexity for Algorithm~\ref{alg: grb} is polynomial-time\footnote{In practice, we find it is usually significantly faster than this worst-case bound.}.

\subsection{Learning Mixture of MNL with EM}
We further extend the proposed method to learn mixture of MNL models, which often leads to better fitness of data~\citep{zhao2018learning} and is of particular interest for the application to network formation modeling~\citep{overgoor2020choosing,overgoor2020scaling}. Formally, a mixture of MNL model is defined below.

\begin{definition}[Mixture of MNL]
Given an integer $k\geq 1$, a mixture of MNL model contains two parts of parameters: $\boldsymbol{\pi}=(\pi_1,\dots, \pi_k)$, where $\pi_r\geq 0$ for $1\leq r\leq k$, and $\sum_{1\leq r\leq k}\pi_r=1$; $(\boldsymbol{w}^{(1)},\dots,\boldsymbol{w}^{(k)})$, where $\boldsymbol{w}^{(r)} \in \mathbb{R}^N$ is the parameter of the $r$-th MNL component. The probability of observing a partial ranking $\mG$ is defined as
\begin{equation}
    P\left(\mG ; \boldsymbol{\pi},(\boldsymbol{w}^{(1)},\dots,\boldsymbol{w}^{(k)})\right) = \sum_{1\leq r\leq k}\pi_r P(\mG;\boldsymbol{w}^{(r)}).
\end{equation}
\end{definition}

We propose to learn the mixture of MNL with an expectation-maximization (EM) algorithm, where the likelihood of each MNL component is estimated by Algorithm~\ref{alg: grb}. In practice, however, standard EM algorithm with random initialization often converges to bad local optima, which is also observed in the learning of other types of mixture models~\citep{jin2016local}. As a remedy, we further propose a clustering-based initialization method that greatly stabilizes the convergence of EM. 

\vpara{Clustering-based initialization.} It is common to initialize an EM algorithm with clustering methods such as K-means~\citep{melnykov2012initializing}. In our case, however, applying K-means requires us to measure the distance between partial rankings, which is rarely studied. Following the idea that the same item is likely to have similar relative ranks in two different rankings if the two ranking are generated from the same MNL component, we propose a novel ranking distance to assist the clustering of ranking data. Formally, we define the relative rank of an item in a Partitioned-Preference ranking below.

\begin{definition}[Relative rank]
Consider a Partitioned-Preference ranking $\mG_S=(S_1\succ\cdots\succ S_M)$, we define the relative rank $r_{\mG_S}(i)$ as
\begin{equation}\label{eq: rr}
    r_{\mG_S}(i) = \dfrac{m(i)-1}{M-1}
\end{equation}
where $1\le m(i)\le M$ and the $m(i)$-th partition $S_{m(i)}$ contains item $i$.
\end{definition}
Intuitively, $r_{\mG_S}(i)$ measures the relative rank of an item by the rank of its partition. Then we can define the ranking distance.

\begin{definition}[Ranking distance]
Consider two Partitioned-Preference rankings $\mG_S=(S_1\succ\cdots\succ S_M)$ with $S=\cup_{m=1}^M S_m$, and $\mG_T=(T_1\succ\dots\succ T_{M'})$ with $T=\cup_{m'=1}^{M'} T_{m'}$, we define the \emph{ranking distance} $d$ between $\mG_S$ and $\mG_T$ as
\begin{equation} \label{eq: ranking-dist}
d(\mG_S,\mG_T) \nonumber = \sqrt{\dfrac{\sum_{i\in S \cap T}\left(r_{\mG_S}(i)-r_{\mG_T}(i)\right)^2}{|S\cap T|}}. 
\end{equation}
\end{definition}
With the ranking distance defined in Eq.~(\ref{eq: ranking-dist}), we can now use K-means to cluster all the observed partial rankings based on the extracted Partitioned-Preference rankings. To initialize the EM algorithm, we learn a single MNL model on each cluster and then initialize an MNL component with the learned parameters. The full EM algorithm (named \textbf{NumGRB-EM}) is summarized in Algorithm~\ref{alg: em}.

\begin{algorithm}
\SetKwInOut{Input}{input}
\SetKwInOut{Output}{Output}
\Input{A list of $n$ observed partial rankings $G=(\mG_1,\dots, \mG_n)$, number of components $k$, number of iterations $B$.}
\Output{Parameters of the mixture of MNL: $\boldsymbol{\pi},(\boldsymbol{w}^{(1)},\dots,\boldsymbol{w}^{(k)})$.}
\BlankLine
Apply clustering-based initialization to $\boldsymbol{w}^{(1)},\dots,\boldsymbol{w}^{(k)}$\;
Initialize $\pi_1=\dots=\pi_k=1/k$\;
\For{$b=1,\dots,B$}{
    $\forall 1\leq j\leq n, 1\leq r\leq k$, compute $P(\mG_j;\boldsymbol{w}^{(r)})$ by Algorithm~\ref{alg: grb}\;
    % E-step
    $\forall 1\leq j\leq n, 1\leq r\leq k$, compute $\gamma_{jr}$ by
    \[\gamma_{jr}\gets \dfrac{P(\mG_j;\boldsymbol{w}^{(r)})\pi_r}{\sum_{s=1}^{k}P(\mG_j;\boldsymbol{w}^{(s)})\pi_s}\;\]
    $\forall 1\leq r\leq k$, update $\pi_r$ by
    \[\pi_r\gets \dfrac{\sum_j \gamma_{jr}}{n}\;\]
    % M-step
    Update $\boldsymbol{w}^{(r)}$ for each $r$, weighing each data point $\mG_j$ with its class responsibility $\gamma_{jr}$, i.e.,
    \[\boldsymbol{w}^{(r)} \gets \underset{\boldsymbol{w}}{\arg\max} \sum_{j=1}^{n}\gamma_{jr}\log P(\mG_j;\boldsymbol{w})\;\]
}

\Return $\boldsymbol{\pi}, \boldsymbol{w}^{(1)}, \dots, \boldsymbol{w}^{(k)}$.
\caption{NumGRB-EM}
\label{alg: em}
\end{algorithm}

\section{Simulation Study}
\label{sec:simulation}
In this section, we empirically verify the estimation accuracy and computation cost of the proposed methods through simulation. 

\subsection{Experiment Setup}

We compare the proposed methods, \textbf{NumGRB} for single MNL and \textbf{NumGRB-EM} for mixture of MNL, with a state-of-the-art baseline \textbf{ELSR-Gibbs}~\citep{Liu_Zhao_Liao_Lu_Xia_2019}, which is a sampling-based method that can be applied to learn both single and mixture of MNL. 

We closely follow the simulation setup by \citet{Liu_Zhao_Liao_Lu_Xia_2019} for our experiments. We conduct experiments on synthetic data generated by both single and mixture of MNL. 

We treat the the utility scores $\boldsymbol{w}$ of the (mixture of) MNL as free parameters, and respectively apply the proposed methods and ELSR-Gibbs to learn $\boldsymbol{w}$. We implement the proposed methods with PyTorch~\citep{paszke2019pytorch} and use AdaGrad optimizer~\citep{duchi2011adaptive} with an initial learning rate of $0.5$. For the baseline ELSR-Gibbs, we use the official implementation\footnote{\url{https://github.com/zhaozb08/MixPL-SPO}} released by the authors of \citet{Liu_Zhao_Liao_Lu_Xia_2019}. We run our experiments on a GeForce RTX 2080 Ti GPU and an Intel(R) Xeon(R) Gold 6230 CPU @ 2.10GHz CPU.

We evaluate the estimation accuracy of each method by the mean squared error (MSE) between the learned parameters and ground truth parameters, both normalized with a softmax function. For the mixture of MNL, the average MSE on all components is reported. We also report the running time of each method to evaluate the computation cost.

\subsection{Experiments on Single MNL}
\label{sec: single}

\vpara{Synthetic data generated by single MNL.} Given the number of items $N$, we first generate the ground truth utility scores $\boldsymbol{w}=(w_1, \dots, w_N)$ uniformly on $[-2,2]$. Then we draw $n$ samples of full rankings from an MNL model parameterized by $\boldsymbol{w}$ following Eq.~(\ref{eq: mnl}). To obtain partial rankings, we sample from all the $\frac{N(N-1)}{2}$ pairwise comparisons determined by the full rankings, and keep each pairwise comparison with probability $p$ independently. 

\begin{figure}[h]
        \vskip -5pt
        \begin{subfigure}[b]{0.23\textwidth}
                \includegraphics[width=\linewidth]{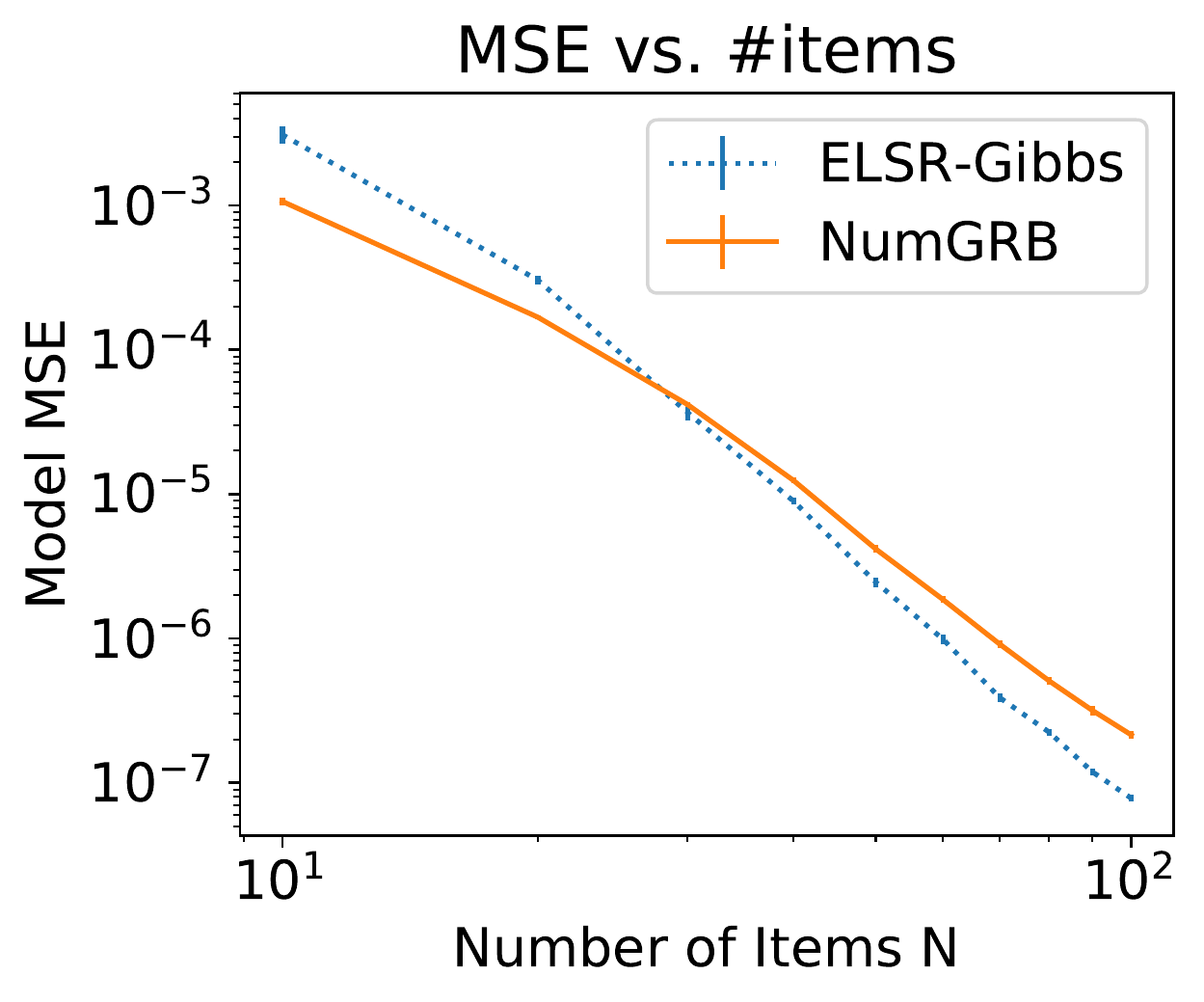}
        \end{subfigure}%
        \begin{subfigure}[b]{0.22\textwidth}
                \includegraphics[width=\linewidth]{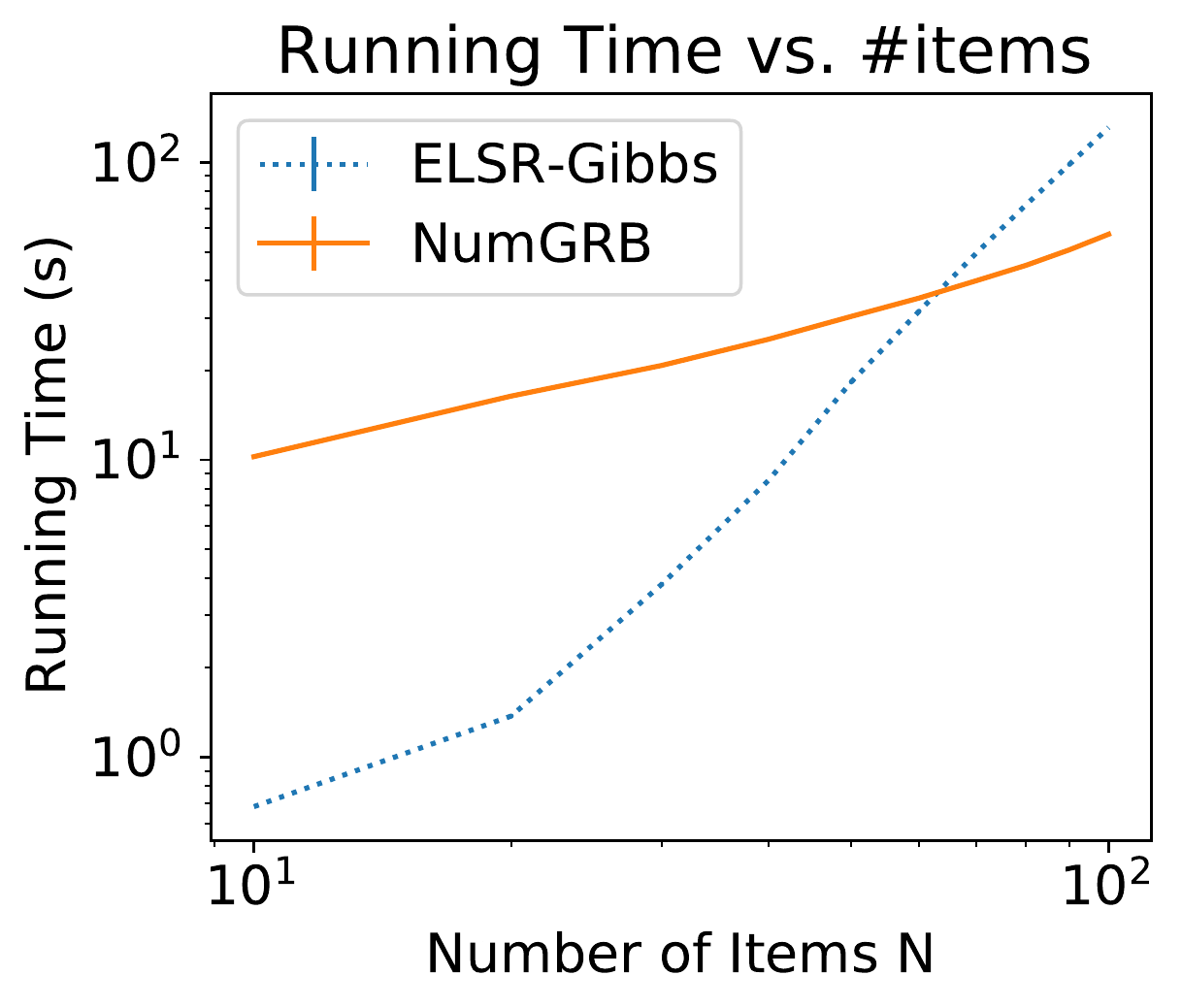}
        \end{subfigure}%
        \caption{MSE and running time of NumGRB and ELSR-Gibbs with varying number of items $N$. The sample size $n$ is fixed as 5,000. The sample rate $p$ is fixed as 0.25. Both axes of the plots are in the logarithmic scale with base 10. The results are averaged over 50 different random seeds and error bars (barely visible) indicate the standard error of the mean.}
        \label{fig:single1}
        \vskip -8pt
\end{figure}

\vpara{Results with varying number of items $N$.} Figure~\ref{fig:single1} shows the MSE and running time of the proposed NumGRB and the baseline ELSR-Gibbs with $N$ varying from 10 to 100. 

In terms of MSE, the proposed NumGRB achieves similar estimation accuracy on most configurations of $N$. The trend that MSE is decreasing with $N$ is due to the artifact that the softmax scores of parameters sum to 1 and larger $N$ implies smaller normalized scores. 

In terms of running time, while the proposed NumGRB has a heavier overhead when $N$ is small, the computational cost of ELSR-Gibbs quickly increases and surpasses NumGRB when $N$ becomes larger. It is worth noting that the figure is a log-log plot. The linear curve of NumGRB indicates that it has a polynomial-time complexity, while the curve of ELSR-Gibbs apparently grows faster than linear, as ELSR-Gibbs does not have a polynomial-time guarantee. We are not able to provide experiment comparison for $N$ significantly larger than $100$ because the baseline ELSR-Gibbs suffers from numerical errors for large $N$. In contrast, the proposed NumGRB is still numerically stable for at least $N=1000$.

\begin{figure}[h]
        \begin{subfigure}[b]{0.21\textwidth}
                \includegraphics[width=\linewidth]{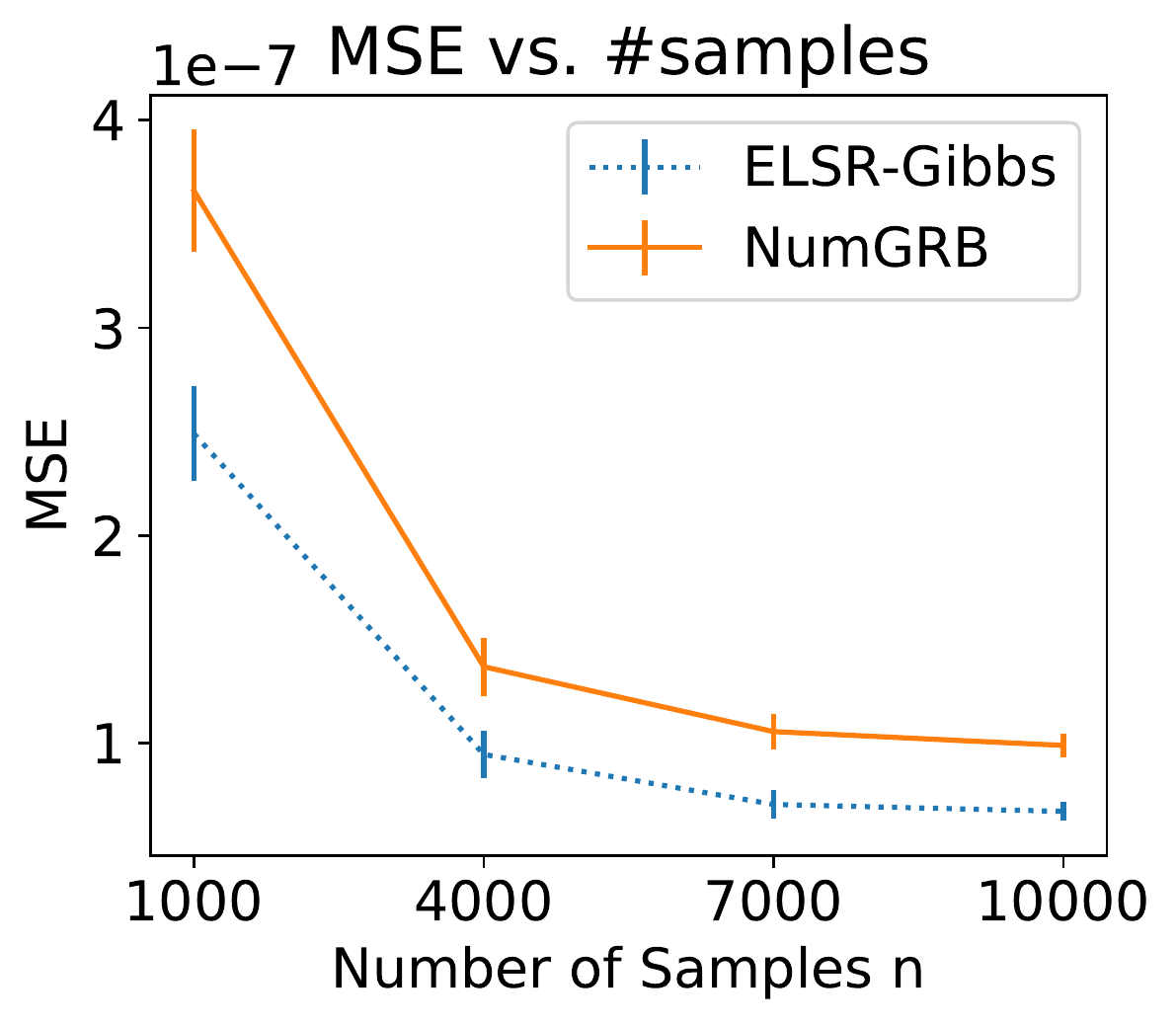}
        \end{subfigure}%
        \begin{subfigure}[b]{0.22\textwidth}
                \includegraphics[width=\linewidth]{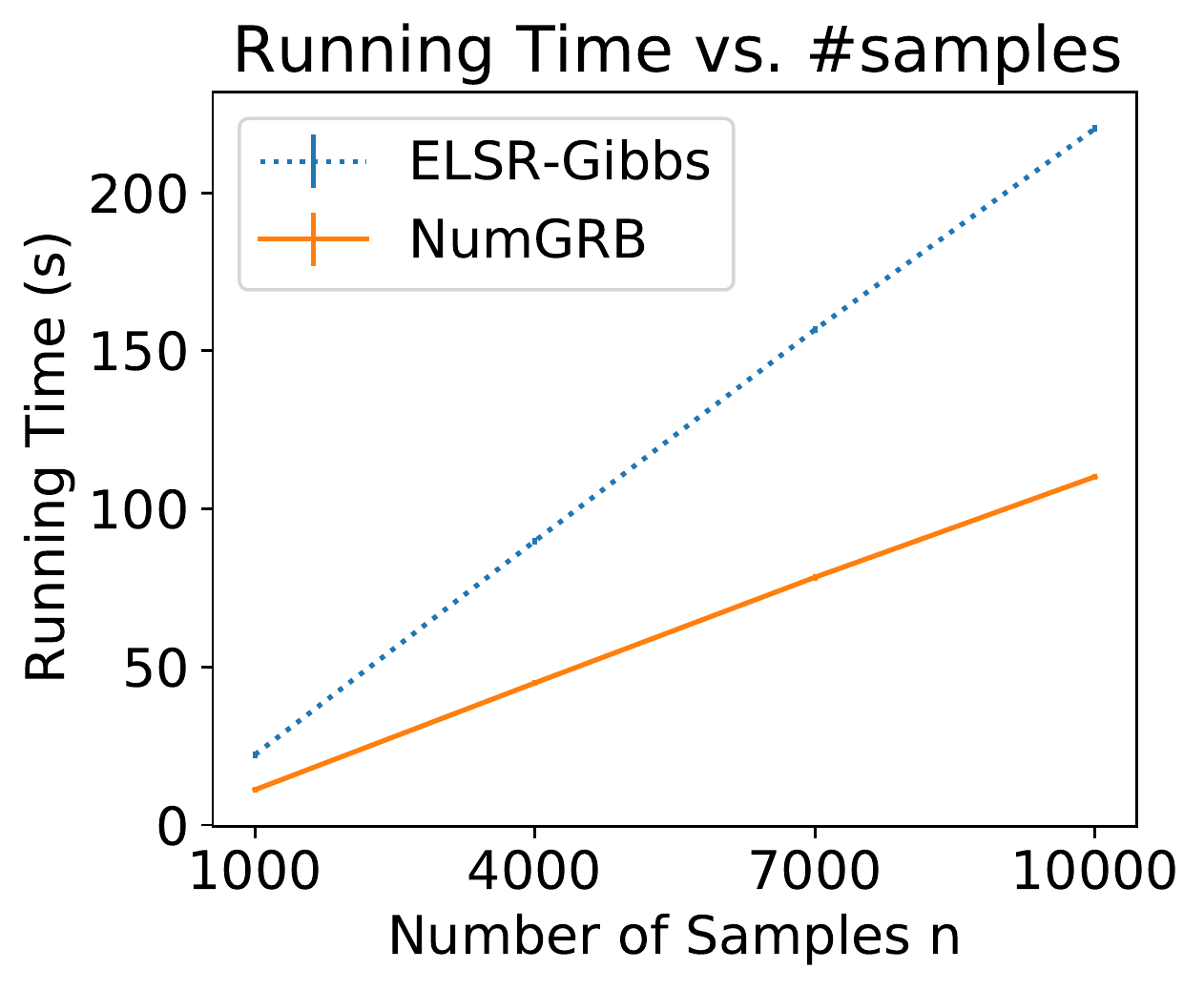}
        \end{subfigure}%
        \caption{MSE and running time of NumGRB and ELSR-Gibbs with varying number of samples $n$. The number of items $N$ is fixed as 60. The sample rate $p$ is fixed as 0.5. Note both axes of the plots are in linear scale. The results are averaged over 10 different random seeds and error bars indicate the standard error of the mean.}
        \label{fig:single2}
        \vskip -10pt
\end{figure}

\vpara{Results with varying number of samples $n$.} Figure~\ref{fig:single2} shows the MSE and running time of the proposed NumGRB and the baseline ELSR-Gibbs with $n$ varying from 1,000 to 10,000. In terms of MSE, while ELSR-Gibbs appears to have better sample efficiency, the difference is not very large (note the y-axis is now in linear scale to show the difference). In terms of running time, both methods appear to increase linearly with $n$, but the computation cost of ELSR-Gibbs increases faster than the proposed NumGRB.

\subsection{Experiments on Mixture of MNL}

\vpara{Synthetic data generated by mixture of MNL.} Following \citet{Liu_Zhao_Liao_Lu_Xia_2019}, we generate synthetic data from a 3-mixture of MNL model. We assign equal weights to the 3 mixture components, i.e., $\pi_r=1/3$ for $r=1,2,3$. The configuration for each MNL component is the same as the single MNL described in Section~\ref{sec: single}. When drawing each of the $n$ samples of full rankings, we first randomly select a mixture component according to $\pi$, then draw a full ranking from this MNL component. After we get the $n$ full rankings, we sample partial rankings similarly as for the single MNL.

\begin{figure}
    \begin{subfigure}[b]{0.22\textwidth}
            \includegraphics[width=\linewidth]{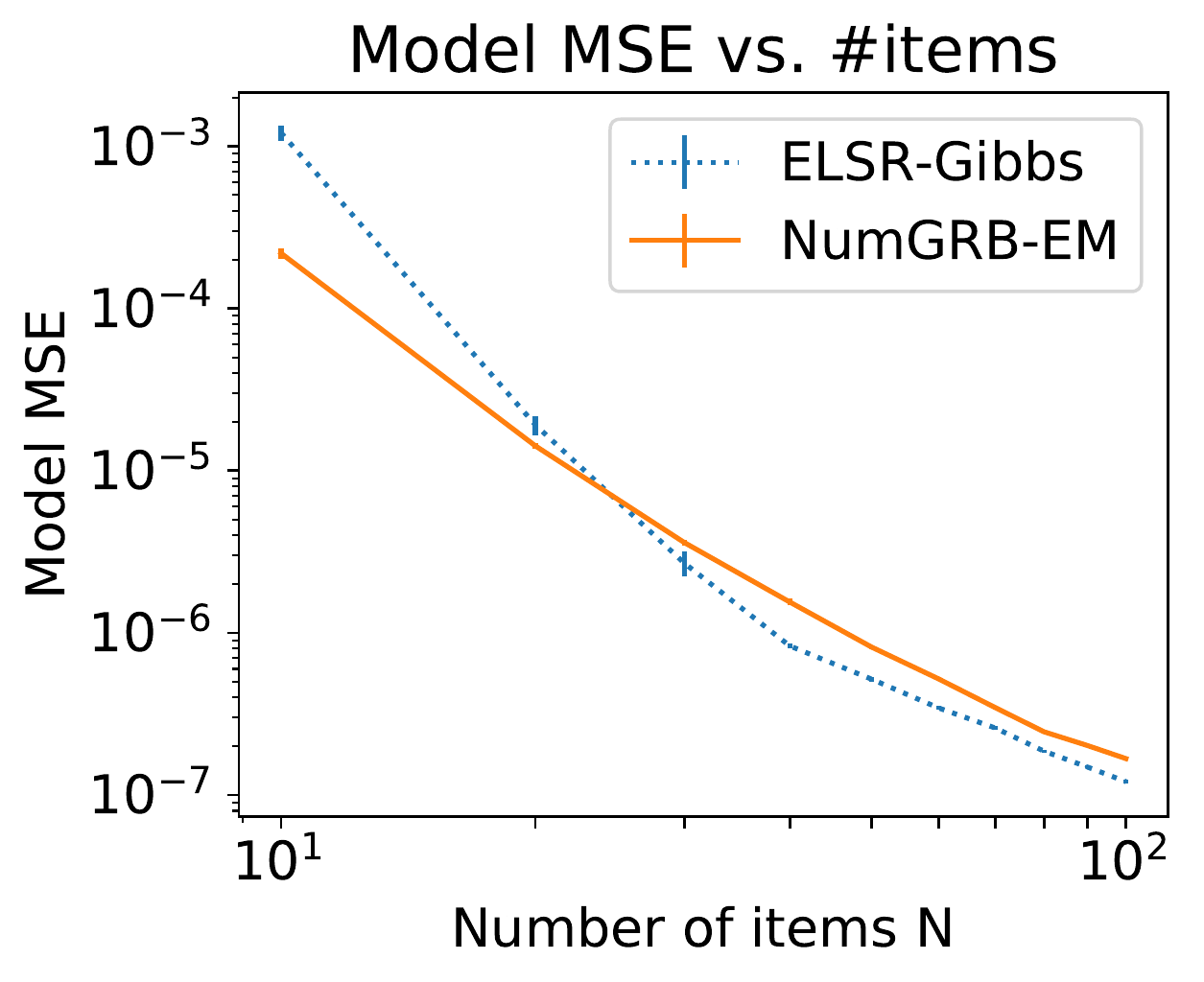}
    \end{subfigure}%
    \begin{subfigure}[b]{0.22\textwidth}
            \includegraphics[width=\linewidth]{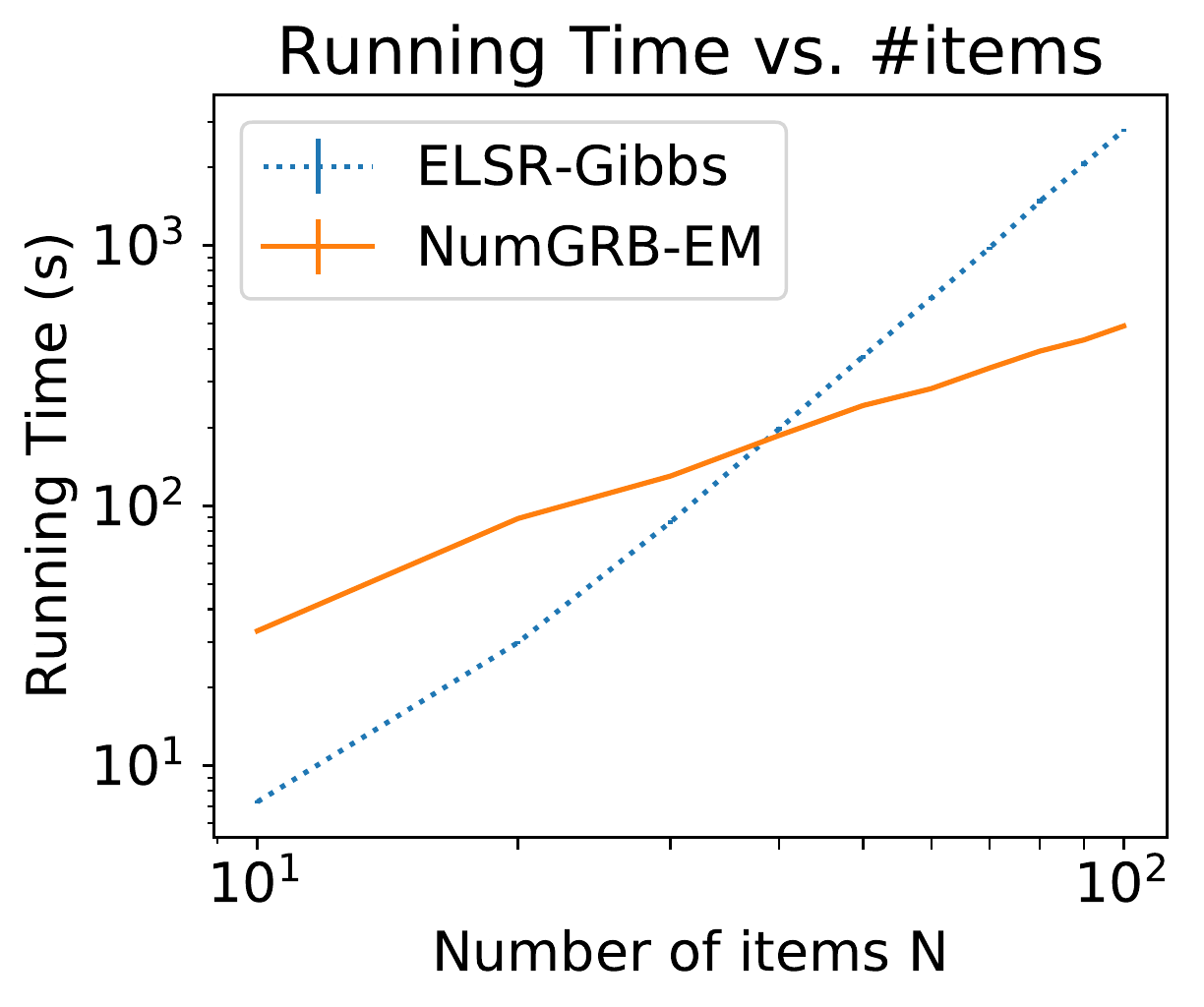}
    \end{subfigure}%
    \caption{MSE (averaged over three MNL components) and running time of NumGRB-EM and ELSR-Gibbs with varying number of items $N$. The sample size $n$ is fixed as 5,000. The sample rate $p$ is fixed as 0.5. Both axes are in the logarithmic scale with base 10. The results are averaged over 50 different random seeds with the the worst 10 results discarded.}
    \label{fig:mix}
    \vskip -10pt
\end{figure}

\vpara{Results.} Figure~\ref{fig:mix} shows the MSE and running time of the proposed NumGRB-EM and the baseline ELSR-Gibbs with $N$ varying from 10 to 100. We first note that, as the negative log-likelihood of the mixture model is non-convex, both the proposed NumGRB-EM and the baseline ELSR-Gibbs occasionally converge to bad local optima where some of the mixture components are completely missed by the model. In such a case, the performance is of orders of magnitude worse compared to the performance when all 3 MNL components are recovered. Such bad local optima dominate the average MSE over different random trials, making the comparison of average MSE meaningless. For a fair and meaningful comparison, we discard the worst 10 trials among the 50 random trials for each method with each $N$, in order to make sure trials that do not recover all three components are removed. We also summarize the number of trials that converge to bad local optima for each method. For the baseline ELSR-Gibbs, 82 out of the 500 training (50 random trials on 10 different setting of $N$) fail to recover all three components. For the proposed NumGRB-EM, only 46 out of the 500 training fail to recover all three components.

In terms of the average MSE, we again observe that the proposed NumGRB-EM achieves similar estimation accuracy as the baseline ELSR-Gibbs on most configurations of $N$. In terms of the running time, the proposed NumGRB-EM obtains even larger advantage over ELSR-Gibbs, compared to the experiments on single MNL.

\section{Application to Network Formation Modeling}
We further apply the proposed methods to network formation modeling on both synthetic and real network data. 

\subsection{Network Formation as Discrete Choice}

We first review the basic discrete-choice-based network formation modeling framework proposed by \citet{overgoor2020choosing}. The central idea is to view the formation of a directed edge from node $i$ to node $j$ as the process that $i$ chooses $j$ over a set of alternative nodes, which can then be modeled by a single MNL or a mixture of MNL. One merit of this framework is that it elegantly unifies many existing network formation models, such as preferential attachment~\citep{barabasi1999emergence}, uniform attachment~\citep{callaway2001randomly}, and latent space model~\citep{hoff2002latent}. More importantly, this framework allows one to combine node features with different network formation mechanisms into a single model and estimate their relative importance.

Next, we review two concrete examples, preferential attachment model and uniform attachment model. 

\begin{definition}[Preferential attachment]
If the formation of new edges in a directed graph follows \emph{preferential attachment}, the new arriving node connects to an existing node $j$ with a probability proportional to a power of their degree $d_j$, i.e.,
\begin{equation}
    P(j; V)=\dfrac{d_j^\alpha}{\sum_{l\in V}d_l^\alpha} = \dfrac{\exp(\alpha\log d_j)}{\sum_{l\in V}\exp(\alpha \log d_l)},
\end{equation}
where $V$ denotes the candidate set. 
\end{definition}

\begin{definition}[Uniform Attachment]
If the formation of new edges in a directed graph follows uniform attachment, the new arriving node connects to any existing node $j$ in the candidate set $V$ with the same probability, i.e.,
\begin{equation}
    P(j; V)=\dfrac{1}{|V|} = \dfrac{\exp(1)}{\sum_{l\in V}\exp(1)}.
\end{equation}
\end{definition}

As can be seen in their definitions, the preferential attachment and uniform attachment models can be viewed as MNL models with utility score for $j$ being $\alpha \log d_j$ and $1$ respectively. 

More generally, under the preferential attachment model, the probability for a node sequentially choosing $(j_1, j_2, \ldots, j_K)$ to connect can be written as
\[P((j_1, j_2, \ldots, j_K); V) = \prod_{k=1}^K \frac{\exp(\alpha \log d_{j_k})}{\sum_{l\in V\setminus \{j_1, \ldots, j_{k-1}\}} \exp(\alpha \log d_l)}.\]

However, one limitation of most existing methods under this framework~\citep{overgoor2020choosing,overgoor2020scaling} is that learning the model from an observed network requires the knowledge of full rankings for everyone's all existing edges, which is unrealistic in many real-world scenarios. The proposed methods in this work, NumGRB and NumGRB-EM, can be applied to learn from more general partial rankings of existing edges in a network.

\begin{table}[]
	\resizebox{\linewidth}{!}{%
\begin{tabular}{c|c|ccc|ccc}
	\hline
	& \multirow{2}{*}{Para. (True Val.)}& \multicolumn{3}{c}{NumGRB-EM} & \multicolumn{3}{c}{Naive} \\ \cline{3-8}
	                                                  &                                & 4-Mix   & 2-Mix   & Single  & 4-Mix   & 2-Mix   & Single  \\ \hline
	\multirow{10}{*}{\begin{tabular}[c]{@{}c@{}}r=0.2  \\ 
	p=0.2\end{tabular}}                               & \multirow{2}{*}{UA (0.04)}     & 0.055   &         &         & 0.127   &         &         \\
	                                                  &                                & (0.004) &         &         & (0.018) &         &         \\ \cline{2-8}
	                                                  & \multirow{2}{*}{UA-FoF (0.16)} & 0.179   &         &         & 0.317   &         &         \\
	                                                  &                                & (0.024) &         &         & (0.018) &         &         \\ \cline{2-8}
	                                                  & \multirow{2}{*}{PA (0.16)}     & 0.174   & 0.239   & 1.000   & 0.091   & 0.213   & 1.000   \\
	                                                  &                                & (0.007) & (0.009) & (0.000) & (0.013) & (0.002) & (0.000) \\ \cline{2-8}
	                                                  & \multirow{2}{*}{PA-FoF (0.64)} & 0.593   & 0.761   &         & 0.466   & 0.787   &         \\
	                                                  &                                & (0.025) & (0.009) &         & (0.023) & (0.002) &         \\ \cline{2-8}
	                                                  & \multirow{2}{*}{$\alpha$ (1)}  & 0.989   & 0.877   & 0.814   & 1.518   & 0.945   & 0.829   \\
	                                                  &                                & (0.019) & (0.008) & (0.006) & (0.098) & (0.021) & (0.006) \\ \hline
	\multirow{10}{*}{\begin{tabular}[c]{@{}c@{}}r=0.5 \\
	p=0.5\end{tabular}}                               & \multirow{2}{*}{UA (0.25)}     & 0.262   &         &         & 0.256   &         &         \\
	                                                  &                                & (0.008) &         &         & (0.013) &         &         \\ \cline{2-8}
	                                                  & \multirow{2}{*}{UA-FoF (0.25)} & 0.249   &         &         & 0.252   &         &         \\
	                                                  &                                & (0.013) &         &         & (0.010) &         &         \\ \cline{2-8}
	                                                  & \multirow{2}{*}{PA (0.25)}     & 0.249   & 0.502   & 1.000   & 0.247   & 0.516   & 1.000   \\
	                                                  &                                & (0.008) & (0.004) & (0.000) & (0.015) & (0.014) & (0.000) \\ \cline{2-8}
	                                                  & \multirow{2}{*}{PA-FoF (0.25)} & 0.240   & 0.498   &         & 0.246   & 0.484   &         \\
	                                                  &                                & (0.017) & (0.004) &         & (0.005) & (0.014) &         \\ \cline{2-8}
	                                                  & \multirow{2}{*}{$\alpha$ (1)}  & 0.949   & 0.557   & 0.632   & 1.199   & 0.589   & 0.644   \\
	                                                  &                                & (0.028) & (0.033) & (0.011) & (0.042) & (0.021) & (0.009) \\ \hline
	\multirow{10}{*}{\begin{tabular}[c]{@{}c@{}}r=0.8 \\
	p=0.8\end{tabular}}                               & \multirow{2}{*}{UA (0.64)}     & 0.521   &         &         & 0.476   &         &         \\
	                                                  &                                & (0.052) &         &         & (0.037) &         &         \\ \cline{2-8}
	                                                  & \multirow{2}{*}{UA-FoF (0.16)} & 0.128   &         &         & 0.106   &         &         \\
	                                                  &                                & (0.002) &         &         & (0.009) &         &         \\ \cline{2-8}
	                                                  & \multirow{2}{*}{PA (0.16)}     & 0.308   & 0.817   & 1.000   & 0.344   & 0.792   & 1.000   \\
	                                                  &                                & (0.052) & (0.005) & (0.000) & (0.039) & (0.005) & (0.000) \\ \cline{2-8}
	                                                  & \multirow{2}{*}{PA-FoF (0.04)} & 0.043   & 0.183   &         & 0.074   & 0.208   &         \\
	                                                  &                                & (0.008) & (0.005) &         & (0.006) & (0.005) &         \\ \cline{2-8}
	                                                  & \multirow{2}{*}{$\alpha$ (1)}  & 0.742   & 0.234   & 0.353   & 0.773   & 0.275   & 0.323   \\
	                                                  &                                & (0.124) & (0.023) & (0.006) & (0.099) & (0.014) & (0.005) \\ \hline
	\multirow{10}{*}{\begin{tabular}[c]{@{}c@{}}r=1   \\
	p=0\end{tabular}}                                 & \multirow{2}{*}{UA (0)}        & 0.020   &         &         & 0.050   &         &         \\
	                                                  &                                & (0.010) &         &         & (0.009) &         &         \\ \cline{2-8}
	                                                  & \multirow{2}{*}{UA-FoF (0)}    & 0.000   &         &         & 0.000   &         &         \\
	                                                  &                                & (0.000) &         &         & (0.000) &         &         \\ \cline{2-8}
	                                                  & \multirow{2}{*}{PA (1)}        & 0.980   & 1.000   & 1.000   & 0.950   & 1.000   & 1.000   \\
	                                                  &                                & (0.010) & (0.000) & (0.000) & (0.009) & (0.000) & (0.000) \\ \cline{2-8}
	                                                  & \multirow{2}{*}{PA-FoF (0)}    & 0.000   & 0.000   &         & 0.000   & 0.000   &         \\
	                                                  &                                & (0.000) & (0.000) &         & (0.000) & (0.000) &         \\ \cline{2-8}
	                                                  & \multirow{2}{*}{$\alpha$ (1)}  & 0.994   & 0.993   & 0.998   & 1.045   & 0.996   & 0.989   \\
	                                                  &                                & (0.016) & (0.014) & (0.012) & (0.008) & (0.013) & (0.006) \\ \hline
\end{tabular}
}
\caption{Estimated model parameters by NumGRB-EM and Naive on the synthetic network data. The results are averaged over 5 random trials and the numbers in brackets indicate the standard error of the mean. The synthetic network data are always generated by 4-mixtures of UA, UA-FoF, PA, and PA-FoF with ground truth $\alpha=1$. We generate data with 4 settings for different values of $r$ and $p$. The columns with heads ``4-Mix'', ``2-Mix'', and ``Single'' refer to the learned models being specified with 4-mixtures, 2-mixtures (PA and PA-FoF), and a single model (PA), respectively. The estimated parameters include the mixture weights for the four components, as well as the parameter $\alpha$ shared by PA and PA-FoF. There are blank entries for ``2-Mix'' and ``Single'' as they are not specified.}
\label{tab: syn-net}
\vskip -25pt
\end{table}

\subsection{Experiments on Synthetic Network Data}
\label{sec:synthetic-network}

We first apply the proposed NumGRB-EM on synthetic network data generated by mixture of 4 variants of network formation models. Through our experiments, we demonstrate that (1) the proposed principled method for learning from partial rankings outperforms naive approximation methods; (2) correctly specifying the mixture components not only affects the learning of mixture weights but also improves the model parameter estimation of each mixture component.

\vpara{Synthetic network data generation.} We simulate the growth of a directed network with a synthetic $(r, p)$-model following~\citet{overgoor2020choosing}. When a new edge is formed, with probability $p$, it is formed by uniform attachment, and with probability $1-p$, it is formed by preferential attachment with $\alpha=1$. After choosing the attachment pattern, we choose the candidate set $V$ to fully determine the mixture component: with probability $r$, the candidate set is all nodes in the network that have not been connected by the source node, while with probability $1-r$, the candidate set is restricted to the friends-of-friends (FoF) of the source node. Therefore, the synthetic data is generated by a mixture of 4 edge-choice distributions, which we denote as UA (uniform attachment, mixture weight $pr$), PA (preferential attachment, mixture weight $(1-p) r$), UA-FoF (uniform attachment restricted to FoF, mixture weight $p(1-r)$), PA-FoF (preferential attachment restricted to FoF, mixture weight $(1-p)(1-r)$). 

We generate synthetic network data with varying values of $(r,p)$ pairs. Due to the space limit, the detailed data generation procedure is provided in Appendix~\ref{sec:appendix-syn-network-generation}.

\vpara{Experiment setup.} The learning task of this experiment is to learn 5 parameters of the mixture model: the mixture weights for the 4 components as well as a shared model parameter $\alpha$ for PA and PA-FoF. And we apply the proposed NumGRB-EM method to learn the parameters from the data. 

For the baseline, we note that ELSR-Gibbs is not scalable enough for this experiment setup. We instead implement a naive baseline method (named as \textbf{Naive}) that treats each new edge formation as an independent top-one ranking, such that the calculation of likelihood becomes feasible (but with information lost). For both NumGRB-EM and Naive, we further apply them to learn a 2-mixture (PA and PA-FoF) and a single model (PA only), in order to investigate their performance under mis-specified settings.

\vpara{Results.} The experiment results are summarized in Table~\ref{tab: syn-net}. Comparing NumGRB-EM and Naive, the proposed NumGRB significantly outperforms Naive in terms of both the estimation of mixture weights and the distribution parameter $\alpha$ in most cases. This verifies that a principled method for calculating the likelihood of partial rankings is critical for the model parameter estimation accuracy. 

Regarding the specification of the mixture models, we start with the setting $r=1$ and $p=0$, which is a special case where the ground truth model is actually a single model following PA. It is not surprising that all models fit well in this case because they all contain the PA component. It is worth noting that applying NumGRB-EM to learn 4-mixtures or 2-mixtures is able to faithfully recover the single model case. In more general settings where $0 < r, p < 1$, we can see that, the 4-mixture models trained by the proposed NumGRB-EM method outperform their mis-specified counterparts not only in terms of the mixture weights, but also in terms of the distribution parameter $\alpha$.

Together, the experiments suggest that both the proposed partial ranking likelihood estimation and the extension to mixture models are helpful for learning choice-based network formation models.

\subsection{Experiments on Real-World Network Data}
\label{sec:real-network}
We further apply the proposed method on two real-world network datasets, Flickr~\citep{mislove2008growth} and Microsoft Academic Graph\footnote{The AMiner project~\citep{tang2008arnetminer,sinha2015overview}: \url{https://aminer.org/open- academic- graph}.}, and compare with the method by~\citet{overgoor2020choosing}. For both datasets, we closely follow most of the data processing steps by \citet{overgoor2020choosing}. The key difference between our method for fitting the data and \citet{overgoor2020choosing}'s lies in how we interpret the ranking of target nodes given the observed network.

For Flickr, \citet{overgoor2020choosing} use the temporal order of the edge formation in the network to construct the full ranking of the target nodes connected by the source node, with the underlying assumption that earlier formed edges correspond to more preferred target nodes. We think, however, this assumption might be too strong, as it might be hard to tell the relative ranking of two target nodes with edges formed within a small time window. We instead use the temporal window of the edge formation to construct Partitioned-Preference rankings. 

For Microsoft Academic Graph, there is even no temporal order available for the edge formation, as the citation links from a source node (a new paper) are almost always formed simultaneously (at its publication), and therefore the ranking of target nodes (exiting papers) is naturally a Partitioned-Preference ranking with 2 partitions (cited or not cited). \citet{overgoor2020choosing} fit the data with the naive method as we described in Section~\ref{sec:synthetic-network}, which treats each edge as an independent top-one ranking. 

For the experiments, we parameterize the utility scores with linear models on node features used by \citet{overgoor2020choosing}. We report both the learned linear coefficients and the precision@k metrics for link prediction. Due to space limit, more detailed experiment setups are provided in Appendix~\ref{sec:appendix-exp-setup}.

\begin{table}[]
	\centering
	\resizebox{\linewidth}{!}{%
		\begin{tabular}{c|cccc|cccc}
			\hline
			& \multicolumn{4}{c|}{\citet{overgoor2020choosing}}&   \multicolumn{4}{c}{NumGRB}                                          \\ \cline{2-9} 
			              & \#1      & \#2                   & \#3      & \#4      & \#1*                  & \#2*                 & \#3*                 & \#4*                          \\ \hline
			log Followers & $1.149$  &                       & $0.7150$ & $0.536$  & $0.806$               &                      & $0.4810$             & $0.3470$                      \\
			Has Degree    & $-0.580$ &                       & $-0.631$ & $-1.745$ & $-3.556$              &                      & $-3.527$             & $-3.335$                      \\
			Reciprocal    & $8.419$  & $8.347$               & $8.197$  & $7.903$  & $5.854$               & $4.491$              & $5.614$              & $5.171$                       \\
			Is FoF        &          & $6.120$               & $3.955$  &          &                       & $3.918$              & $2.888$              &                               \\
			2 Hops        &          &                       &          & $6.290$  &                       &                      &                      & $3.778$                       \\
			3 Hops        &          &                       &          & $2.851$  &                       &                      &                      & $1.348$                       \\
			4 Hops        &          &                       &          & $0.583$  &                       &                      &                      & $-0.877$                       \\
			5 Hops        &          &                       &          & $-0.585$ &                       &                      &                      & $-1.92$                      \\
			$\geq$ 6 Hops &          &                       &          & $-1.122$ &                       &                      &                      & $-2.264$                      \\ \hline
			Precision@1   & $0.7606$ & $\underline{0.7898}$  & $0.8238$ & $0.8262$ & $\underline{0.7622}$  & $\underline{0.7898}$ & $\underline{0.8414}$ & $\underline{\mathbf{0.8448}}$ \\
			Precision@3   & $0.6215$ & $0.6385$              & $0.6737$ & $0.6756$ & $\underline{0.6227}$  & $\underline{0.6398}$ & $\underline{0.6796}$ & $\underline{\mathbf{0.6846}}$ \\
			Precision@5   & $0.5444$ & $0.5573$              & $0.5880$ & $0.5895$ & $ \underline{0.5464}$ & $\underline{0.5592}$ & $\underline{0.5926}$ & $\underline{\mathbf{0.5948}}$ \\ \hline
		\end{tabular}
	}
	\caption{Estimated model parameters and precision@k for link prediction on Flickr. Model \#1-4 are trained by the baseline method by \citet{overgoor2020choosing}. Model \#1-4* are trained by the proposed NumGRB. For the precision@k metrics, the bold markers denote the best performance and the underline markers denote the better performance between the corresponding baseline and proposed methods.}
	\label{tab:flickr}
	\vskip -10pt
\end{table}

\begin{table}[]
\resizebox{\linewidth}{!}{%
\begin{tabular}{c|cccc|cccc}
\hline
                         & \multicolumn{4}{c|}{\citet{overgoor2020choosing}}                                                                                    & \multicolumn{4}{c}{NumGRB}                                                                               \\ \cline{2-9} 
                         & \#1                          & \#2                          & \#3                          & \#4                           & \#1*                         & \#2*                         & \#3*                         & \#4*                         \\ \hline
log Citations            & $0.717$                      & $0.794$                      & $1.052$                      & $1.044$                       & $0.649$                      & $0.611$                      & $0.932$                      & $0.922$                      \\
Has Degree               & $1.684$                      & $1.677$                      & $1.862$                      & $1.830$                       & $0.280$                      & $0.250$                      & $1.495$                      & $1.462$                      \\
Has Same Author          &                              & $6.523$                      & $5.928$                      & $5.913$                       &                              & $4.870$                      & $4.430$                      & $4.424$                      \\
log Age                  &                              &                              & $-1.096$                     & $-1.069$                      &                              &                              & $-1.458$                      & $-1.443$                     \\
log Max Papers &                              &                              &                              & \multirow{2}{*}{$0.029$}                       &                              &                              &                              & \multirow{2}{*}{$0.030$}                 \\
 by Authors &                              &                              &                              &                    &                              &                              &                              &                     \\ \hline
Precision@1              & $\underline{0.2495}$                     & $0.4745$                     & $\underline{0.4960}$                     & $0.4970$                      & $0.2490$                     & $\underline{0.4835}$                     & $\underline{0.4960}$                     & $\underline{\textbf{0.5020}}$                     \\
Precision@3              & $0.1955$                     & $0.4128$                     & $0.4213$                     & $0.4235$                      & \underline{$0.1957$}                     & $\underline{0.4132}$                     & $\underline{0.4293}$                     & $\underline{\textbf{0.4295}}$                     \\
Precision@5              & $0.1628$ & $\underline{0.3558}$ & $0.3654$ & $0.3664$ & $\underline{0.1629}$ & $0.3545$ & $\underline{\textbf{0.3735}}$ & $\underline{0.3731}$ \\
Precision@10             & $0.1238$                     & $\underline{0.2609}$                     & $0.2683$                     & $0.2690$                      & $\underline{0.1238}$                     & $0.2587$                     & $\underline{0.2756}$                     & $\underline{\textbf{0.2758}}$                     \\ \hline
\end{tabular}
}
\caption{Estimated model parameters and precision@k for link prediction on Microsoft Academic Graph. The markers are the same as in Table~\ref{tab:flickr}.}
\label{tab:mag}
\vskip -20pt
\end{table}

\vpara{Results.} The experiment results on Flickr and Microsoft Academic Graph are provided in Table~\ref{tab:flickr} and Table~\ref{tab:mag} respectively. 

In terms of the precision@k metrics for link prediction, we observe that treating the rankings of target nodes as Partitioned-Preference rankings and train the models with the proposed NumGRB outperforms the method by \citet{overgoor2020choosing} when there are more features included in the model. The difference is smaller when there are very few features, which is not surprising as the model is not able to learn sophisticated patterns in these cases. 

In terms of the estimated linear coefficients, qualitatively the explanations by models trained with both methods have similar trend. Quantitatively, however, some of the estimated coefficients by the two methods differ by a large amount. For example, comparing Model \#3 and \#3* in Table~\ref{tab:flickr}, the relative strengths of the coefficients between ``Has Degree'' and ``Reciprocal'' are very different. Researchers facing the results given by Model \#3 may interpret that the impact of having zero followers is marginal. However, the results by Model \#3* tell a largely different story. And the latter is probably a better fit of the data (as suggested by the link prediction accuracy).

\section{Conclusion}

In this paper, we propose novel methods for fast learning of the MNL model and its mixture from general partial rankings. We verify both the effectiveness and efficiency of the proposed methods through simulation studies. We also demonstrate that the proposed methods can be very useful on applications to choice-based network formation modeling, where the formation of edges is viewed as discrete choices from (mixture of) MNL models. Through experiments on both synthetic networks and real-world networks, we show that efficient learning (mixture of) MNL models from partial rankings is necessary for proper estimation of the model parameters, which are critical for downstream network analysis tasks in scientific research. 

\subsection*{Acknowledgements}
This work was in part supported by the National Science Foundation under grant number 1633370.

%%
%% The next two lines define the bibliography style to be used, and
%% the bibliography file.
\bibliographystyle{ACM-Reference-Format}
\bibliography{reference}

\clearpage
\appendix
\section{Appendix}
\subsection{Proof of Lemma~\ref{lemma:counter-example}}
\label{sec:appendix-counter-example}
\begin{proof} [Proof of Lemma~\ref{lemma:counter-example}]
For the sake of contradiction, assume $\{(3 \succ 2), (3 \succ 5), (4 \succ 1) \}$ is a Partitioned-Preference ranking on $\{1, 2, 3, 4, 5\}$. 
    
As we do not know the relative ranking between 3 and 4, so according to (a) and (b) in Definition~\ref{def:partition}, 3 and 4 must be in the same partition. Similarly, 3 and 1 must be in the same partition. However, we also know $4 \succ 1$, which contradicts with (c) in Definition~\ref{def:partition}. 
\end{proof}
\subsection{Proof of Proposition~\ref{prop: disjoint}}
\label{sec:appendix-disjoint}

We first introduce a well-known connection between the MNL model and Gumbel distribution~\citep{yellott1977relationship}. For an MNL model parameterized by $\boldsymbol{w}$, let $g_{\boldsymbol{w}_1}, g_{\boldsymbol{w}_2}, \ldots, g_{\boldsymbol{w}_N}$ denote $N$ independent Gumbel random variables following 
\[\text{Gumbel}(\boldsymbol{w}_1), \text{Gumbel}(\boldsymbol{w}_2), \ldots, \text{Gumbel}(\boldsymbol{w}_N),\]
where $\text{Gumbel}(w)$ refers to the Gumbel distribution with location parameter $w$. Then the probability of observing a full ranking $(i_1, \ldots, i_N)\in 2^{\N}$ under the MNL model is equal to the probability that $g_{\boldsymbol{w}_{i_1}} > g_{\boldsymbol{w}_{i_2}} > \cdots > g_{\boldsymbol{w}_{i_N}}$, i.e., 
\[P((i_1, \ldots, i_N);\boldsymbol{w}) = P(g_{\boldsymbol{w}_{i_1}} > g_{\boldsymbol{w}_{i_2}} > \cdots > g_{\boldsymbol{w}_{i_N}}).\]
So there is a bijective mapping between the full rankings and the orders of the Gumbel variables.

\begin{proof} [Proof of Proposition~\ref{prop: disjoint}]

Given a set of Gumbel variables, 
\[g_{\boldsymbol{w}_1}, g_{\boldsymbol{w}_2}, \ldots, g_{\boldsymbol{w}_N},\]
for any $A\subseteq \N$, define 
\[\bar{m}(A) \triangleq \max_{i\in A} g_{w_i}, \quad \ubar{m}(A) \triangleq \min_{i\in A} g_{w_i}.\]

Further denote $\tilde{S}_m = \cup_{l=m}^M S_l$ and $\tilde{T}_m = \cup_{l=m}^{M'} T_l$.

Based on the bijective mapping between the full rankings and the orderes of Gumbel variables, it is easy to verify that the event of $\Omega(\mG_S; \N)$ corresponds to the event of Gumbel variables satisfying
\[\ubar{m}(S_1) > \bar{m}(\tilde{S}_2), \ubar{m}(S_2) > \bar{m}(\tilde{S}_3), \ldots, \ubar{m}(S_{M-1}) > \bar{m}(\tilde{S}_M).\]
While the event of $\Omega(\mG_T; \N)$ corresponds to the event of Gumbel variables satisfying
\[\ubar{m}(T_1) > \bar{m}(\tilde{T}_2), \ubar{m}(T_2) > \bar{m}(\tilde{T}_3), \ldots, \ubar{m}(T_{M'-1}) > \bar{m}(\tilde{T}_{M'}).\]

Then we have
\begin{align*}
    & P(\mG_S, \mG_T; \boldsymbol{w}) \\
    =& P(\ubar{m}(S_1) > \bar{m}(\tilde{S}_2), \ldots, \ubar{m}(S_{M-1}) > \bar{m}(\tilde{S}_M),\\
    & \phantomrel{==}    \ubar{m}(T_1) > \bar{m}(\tilde{T}_2), \ldots, \ubar{m}(T_{M'-1}) > \bar{m}(\tilde{T}_{M'})).
\end{align*}
Further, since $S\cap T = \emptyset$, and the Gumbel variables are mutually independent, we have
\begin{align*}
    & P(\ubar{m}(S_1) > \bar{m}(\tilde{S}_2), \ldots, \ubar{m}(S_{M-1}) > \bar{m}(\tilde{S}_M), \\
    &\phantomrel{==}\ubar{m}(T_1) > \bar{m}(\tilde{T}_2), \ldots, \ubar{m}(T_{M'-1}) > \bar{m}(\tilde{T}_{M'})) \\
    =& P\left(\ubar{m}(S_1) > \bar{m}(\tilde{S}_2), \ldots, \ubar{m}(S_{M-1}) > \bar{m}(\tilde{S}_M)\right)\\
    &\phantomrel{==} \cdot P\left(\ubar{m}(T_1) > \bar{m}(\tilde{T}_2), \ldots, \ubar{m}(T_{M'-1}) > \bar{m}(\tilde{T}_{M'})\right) \\
    =&  P(\mG_S; \boldsymbol{w}) P(\mG_T; \boldsymbol{w}). 
\end{align*}

\end{proof}

\subsection{Detailed Synthetic Network Generation Procedure}
\label{sec:appendix-syn-network-generation}
Given a pair of $(r, p)$, we generate the network with the following procedure. We first generate an initial network using Erd\H{o}s-R{\'e}nyi random graph~\citep{erdos1960evolution} with $1000$ nodes and probability $0.005$, and we also randomly choose 20 nodes and increase their edges by 50 to 80 to increase the variety of node degrees. We then randomly select half of the nodes in the network as source nodes to form new edges. For each source node, we sample 1 out of the 4 mixture components, UA, PA, UA-FoF, and PA-FoF as its edge-choice distribution, and make each node form 5 new edges\footnote{A few nodes may form less than 5 edges when their candidate set is FoF and the size of the candidate set is less than 5.}. Throughout the new edge formation process, we use the node degrees in the initial network when calculating the choice distribution. We treat the new edge formation for each source node as a Partitioned-Preference ranking with 2 partitions: the target nodes pointed by the new edges are in the first partition while the nodes that are not connected by the source node are in the second partition. We vary the values of $r$ and $p$ to get multiple settings of synthetic data. For each setting, we repeat the data generation with 5 random seeds.

\subsection{Detailed Experiment Setup for Experiments on Real-World Networks}
\label{sec:appendix-exp-setup}
\vpara{Experiment setup for Flickr.} We follow the data processing steps of \citet{overgoor2020choosing} to extract the network data. In this network, we consider 4 types features: the log of number of the followers, whether the following edge is reciprocal, whether the follower and the followed user is friends-of-friends, and the path length (hop) from the new follower to the followed user before the new edge is made.  It is possible that a user, has never been followed before. Thus, we use the censored log function, i.e., define $\log(0)=0$, and add another feature ``has degree'' to distinguish the zero case.  We sample $50000$ users in a period (20 days) that have following events, and record all the new edges formed by them. Edges formed in the same 10 days are considered in one partition. Edges formed earlier are assumed to be more preferable than later ones. We then sample $100$ edges uniformly at random from the dataset as the negative samples for each user. The testing set is sampled in the same way, with the sampling period later for 20 days. We train 4 linear models shown in Table.~\ref{tab:flickr} with the proposed NumGRB method and evaluate them using precision@[$1,3,5$] on testing set. 

\vpara{Experiment setup for Microsoft Academic Graph.}
Microsoft Academic Graph contains the events of citations between publications. Our task is to predict the references of a new publication among a large candidate set. We continue to follow the steps of \citet{overgoor2020choosing} to extract the citation data in the domain of climatology. We include 4 features of the candidate references: the log of number of citations at the time of citation, whether the paper shares authors with the candidate reference, the log age of the paper in years at the time of citation, and the maximum number of publications ever by any one of the authors at the time of publication. We still use the censored log function to avoid log of zero. Additionally, we introduce another feature, whether the candidate reference has degree, to distinguish if the number of citation is 0 or 1. We sample 12,000 papers after 2010 as source nodes and record references for each paper. We sample 5000 negative samples for each sampled paper uniformly at random. The data is then split into training set and testing set by time. We train 4 linear models shown in Table.~\ref{tab:mag} using the proposed NumGRB method and report precision@[$1,3,5,10$] on testing set.

\end{document}